
\PassOptionsToPackage{table}{xcolor}
\documentclass[letterpaper]{article}
\usepackage{amssymb,amsmath,amsthm}
\usepackage{aaai}
\usepackage{times}
\usepackage{helvet}
\usepackage{courier}
\usepackage{xcolor}
\usepackage{mathtools}
\usepackage{float}
\usepackage{hyperref}

\newcommand{\sminus}{-}
\newcommand{\hr}[2]{\underline{\href{#1}{#2}}}
\newcommand{\eg}{{\em e.g.,~}}
\newcommand{\ie}{{\em i.e.,~}}

\newcommand{\oracle}{LTU Attacker~}

\newtheorem{theorem}{Theorem}

\frenchspacing
\setlength{\pdfpagewidth}{8.5in}
\setlength{\pdfpageheight}{11in}
\pdfinfo{
/Title \oracle Attacker for Membership Inference
/Author (Put All Your Authors Here, Separated by Commas)}
\setcounter{secnumdepth}{0}


 \begin{document}
\renewcommand*{\thefootnote}{\fnsymbol{footnote}}
\title{\oracle for Membership Inference}
\author{Joseph Pedersen\thanks{Correspond to:  joseph.m.pedersen@gmail.com} \\ RPI, New York, USA
\And
Rafael Muñoz-Gómez \\ LISN/CNRS/INRIA \\ U. Paris-Saclay, France
\And
Jiangnan Huang \\ LISN/CNRS/INRIA \\ U. Paris-Saclay, France
\AND
Haozhe Sun \\ LISN/CNRS/INRIA \\ U. Paris-Saclay, France
\And
Wei-Wei Tu \\ 4Paradigm, China \\ ChaLearn, USA
\And
Isabelle Guyon \\ LISN/CNRS/INRIA \\ U. Paris-Saclay, France \\ ChaLearn, USA
\AND
{\bf Third AAAI Workshop on Privacy-Preserving Artificial Intelligence (PPAI-22), February 2022}\\
}
\maketitle
\renewcommand*{\thefootnote}{\arabic{footnote}}
\setcounter{footnote}{0}


\begin{abstract}
\begin{quote}
We address the problem of defending predictive models, such as machine learning classifiers (Defender models), against membership inference attacks, in both the black-box and white-box setting, when the trainer and the trained model are publicly released. 
The Defender aims at optimizing a dual objective: utility and privacy. 
Both utility and privacy are evaluated with an external apparatus including an Attacker and an Evaluator. On one hand, Reserved data, distributed similarly to the Defender training data, is used to evaluate Utility; on the other hand, Reserved data, mixed with Defender training data, is used to evaluate membership inference attack robustness. In both cases classification accuracy or error rate are used as the metric: Utility is evaluated with the classification accuracy of the Defender model; Privacy is evaluated with the membership prediction error of a so-called ``Leave-Two-Unlabeled'' LTU Attacker, having access to all of the Defender and Reserved data, except for the membership label of one sample from each.
We prove that, under certain conditions, even a ``na\"ive'' \oracle can achieve lower bounds on privacy loss with simple attack strategies, leading to concrete necessary conditions to protect privacy, including: preventing over-fitting and adding some amount of randomness.
However, we also show that such a na\"ive \oracle can fail to attack the privacy of models known to be vulnerable in the literature, demonstrating that knowledge must be complemented with strong attack strategies to turn the \oracle into a powerful means of evaluating privacy.
Our experiments on the QMNIST and CIFAR-10 datasets validate our theoretical results and confirm the roles of over-fitting prevention and randomness in the algorithms to protect against privacy attacks.
\end{quote}
\end{abstract}


\begin{figure*}[!h]
\begin{centering}
  \includegraphics[width=\textwidth]{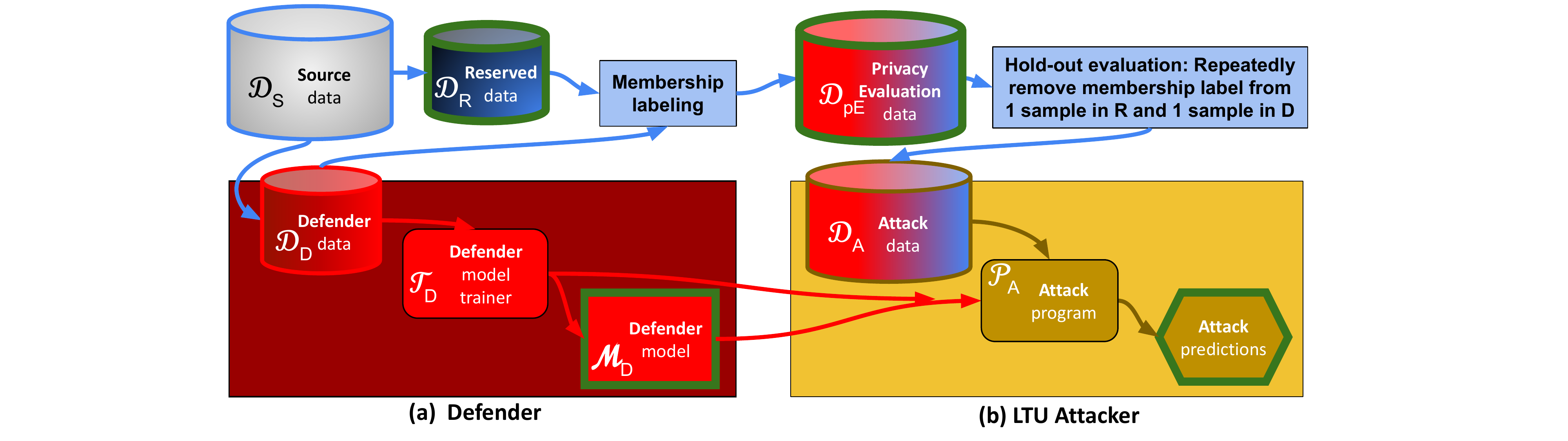} \\
  \includegraphics[width=\textwidth]{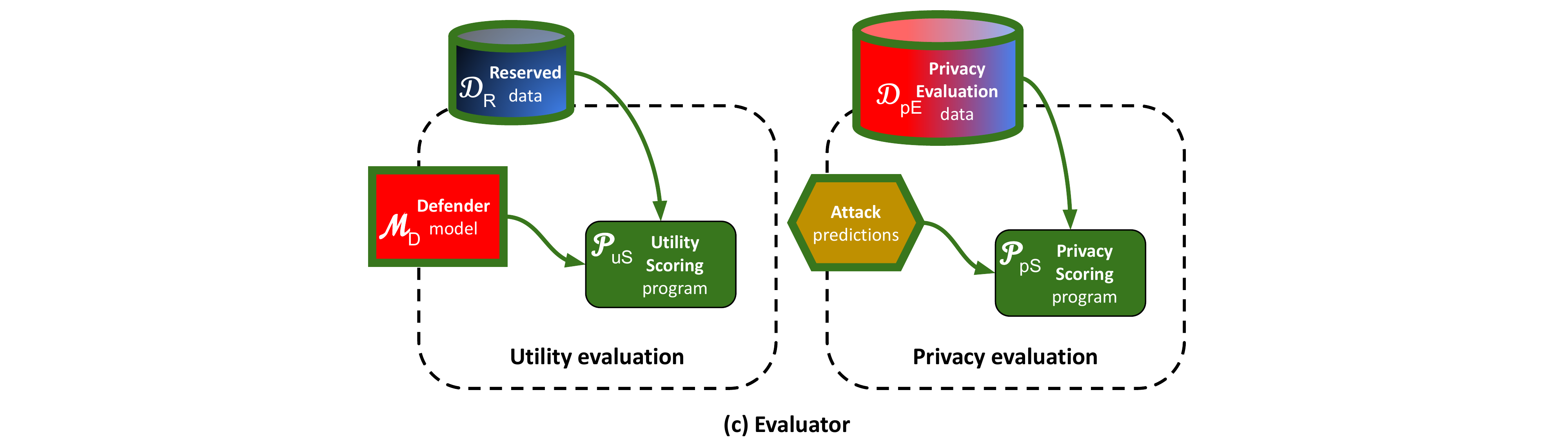}
  \caption{{\bf Methodology Flow Chart. (a) Defender:} Source data are divided into {\color{red} Defender data}, to train the model under attack ({\color{red} Defender model}) and {\color{blue} Reserved data} to evaluate such model. The {\color{red} Defender model trainer} creates a model optimizing a utility objective, while being as resilient as possible to attacks. {\bf (b) LTU Attacker:} The evaluation apparatus includes an {\color{orange} LTU Attacker} and an {\color{teal} Evaluator}: The evaluation apparatus performs a hold-out evaluation leaving two unlabeled examples (LTU) by repeatedly providing the {\color{orange} LTU Attacker} with ALL of the {\color{red} Defender} and {\color{blue} Reserved} data samples, together with their {\em membership origin}, hiding only the membership label of 2 samples. The {\color{orange} LTU Attacker} must turn in the membership label (Defender data or Reserved data) of these 2 samples (Attack predictions). {\bf (c) Evaluator:} The {\color{teal} Evaluator} computes two scores: {\color{orange} LTU Attacker} prediction error ({\color{teal} Privacy metric}), and {\color{red} Defender model} classification performance ({\color{teal} Utility metric}).}
  \label{fig:teaser}
  \end{centering}
\end{figure*}


\section{Introduction}

    Large companies are increasingly reluctant to let any information out, for fear of privacy attacks and possible ensuing lawsuits. Even government agencies and academic institutions, whose charter is to disseminate data and results publicly, must be careful. Hence, we are in great need of simple and provably effective protocols to protect data, while ensuring that some utility can be derived from them. Though critical sensitive data must never leave the source organization (Source) -- company, government, or academia --, an authorized researcher (Defender) may gain access to them within a secured environment to analyse them and produce models (Product). Source may desire to release Product, provided that desired levels of Utility and Privacy are met.  We consider the most complete release of model information, including the Defender trainer, with all its settings, and the trained model. This enables ``white-box attacks'' from potential attackers \cite{nasr2019comprehensive}. 
    We devise an {\em evaluation apparatus} to help Source in its decision whether or not to release Product (Figure \ref{fig:teaser}). The setting considered is that of ``membership inference attack'', in which an attacker seeks to uncover whether given samples, distributed similarly as the Defender training dataset, belong or not to such dataset \cite{shokri2017membership}.
    The apparatus includes an Evaluator and an LTU Attacker. The Evaluator performs a hold-out leave-two-unlabeled (LTU) evaluation, giving the \oracle access to extensive information: all the Defender and Reserved data, except for the membership label of one sample from each. The contributions of our paper include this new evaluation apparatus. Its soundness is backed by some initial theoretical analyses and by preliminary experimental results, which indicate that Defender models can protect data privacy while retaining utility in such extreme attack conditions.


\section{Related work}
Membership inference attacks (MIA) have been extensively studied in the last years. \cite{li2013membership} developed a privacy framework called ``Membership Privacy'', establishing a family of related privacy definitions. \cite{shokri2017membership} explored the first MIA scenario, in which an attacker has black-box query access to a classification model $f$ and can obtain the prediction vector of the data record $x$ given as input. \cite{long2017towards} proposed a metric inspired from Differential Privacy to measure the privacy risk of each training record, based on the impact it has on the learning algorithm.
Similarly, \cite{song2021systematic} incorporate a fine-grained analysis on his systematic evaluation of privacy risk. The Bayesian metric proposed is defined as the posterior probability that a given input sample is from the training set after observing the target model’s behavior over that sample. \cite{jayaraman2020revisiting} explores a more realistic scenario. They consider skewed priors where only a small fraction of the samples belong to the training set, and its attack strategy is focused on selecting the best inference thresholds. In contrast, our \oracle is not trying to address a realistic scenario.

\cite{yeom2018privacy} studied the connection between overfitting and membership inference, showing that overfitting is a sufficient condition to guarantee success of the adversary. \cite{truex2019demystifying} continued exploring MIAs in the black-box model setting, considering different scenarios according to the prior knowledge that the adversary has about the training data: black-box, grey-box and white-box. Recent work also addressed membership inference attacks against generative models \cite{hayes2018logan,hilprecht2019reconstruction,chen2020gan}.
This paper focuses on the attack of discriminative models in an {\em `all knowledgeable scenario'}, both from the point of view of model and data.

Several frameworks have been proposed to mitigate attacks, among which 
Differential Privacy \cite{dwork2006calibrating} has become a reference method. Work in \cite{abadi2016deep,xie2018differentially} show how to implement this technique in deep learning. Using DP to protect against attacks comes at the cost of decreasing the model's utility. 
Regularization approaches have been investigated, in an effort to increase model robustness against privacy attacks, while retaining most utility.
One of them inspired our idea to defend against attacks in an adversarial manner: Domain-adversarial training \cite{ganin2016domain} introduced in the context of domain adaptation. \cite{nasr2018machine} will later use this technique to defend against MIA. \cite{huang2021damia} helped bridge the gap between membership inference and domain adaptation.

Most literature addressing MIA considers a black-box scenario, where the adversary only has access to the model through an API and very little knowledge about the training data. Closest to the scenario considered in this paper, the work of \cite{nasr2019comprehensive} analyzes attackers having all information about the neural network under attack, including inner layer outputs; allowing them to exploit privacy vulnerabilities of the SGD algorithm. However, contrary to the \oracle we are introducing, the authors' adversary executes the attack in an unsupervised way; without having access to  membership labels of any data sample. Bayes optimal strategies have been examined in \cite{sablayrolles2019white}; showing that, under some assumptions, the optimal inference depends only on the loss. Recent work in \cite{liu2020mace} also aims to design the best possible adversary, defined in terms of the Bayes Optimal Classifier, to estimate privacy leakage of a model.

   
\begin{figure*}[t]
    \centering
    \includegraphics[width=\textwidth]{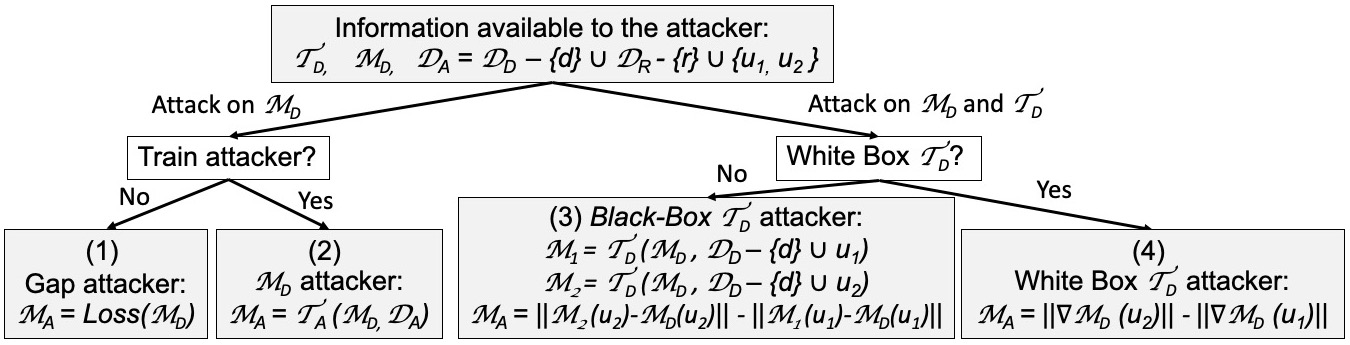}
    \caption{\label{fig:taxonomy} {\bf Taxonomy of \oracle.} {\bf Top:} Any \oracle has available the Defender trainer $\mathcal{T}_D$, the trained Defender model $\mathcal{M}_D$, and attack data $\mathcal{D}_A$ including (almost) all the Defender data $\mathcal{D}_D$ and Reserved data $\mathcal{D}_R$ $\mathcal{D}_A=\mathcal{D}_D\sminus\{\texttt{membership}(d)\} \cup \mathcal{D}_R\sminus\{\texttt{membership}(r)\}$. But it may use only part of this available knowledge to conduct attacks. $r$ and $d$ are two labeled examples belonging $\mathcal{D}_R$  and $\mathcal{D}_D$ respectively, and $u_1$ and $u_2$ are two unlabeled examples, one from $\mathcal{D}_R$ and one from $\mathcal{D}_D$ (ordered randomly). {\bf Left:} Attacker $\mathcal{M}_A$ targets only the trained Defender model $\mathcal{M}_D$. {\bf Right:} $\mathcal{M}_A$ targets both $\mathcal{M}_D$ and its trainer $\mathcal{T}_D$.}
\end{figure*}


\section{Problem statement and methodology}

We consider the scenario in which an owner of a data Source $\mathcal{D}_S$
wants to create a predictive model trained on some of those data, but needs to ensure that privacy is preserved. In particular, we focus on privacy from membership inference. 
The data owner entrusts an agent called Defender with creating such a model, giving him access to a random sample $\mathcal{D}_D \subset \mathcal{D}_S$ (Defender dataset). 
We denote by  $\mathcal{M}_D$ the trained model (Defender model) and by $\mathcal{T}_D$ the algorithm used to train it (Defender trainer). 
The data owner wishes to release $\mathcal{M}_D$, and eventually $\mathcal{T}_D$, provided that certain standards of privacy and utility of $\mathcal{M}_D$ and $\mathcal{T}_D$ are met. 
To evaluate such utility and privacy, the data owner reserves a dataset $\mathcal{D}_R \subset \mathcal{D}_S$, disjoint from $\mathcal{D}_D$, and gives both $\mathcal{D}_D$ and $\mathcal{D}_R$ to a trustworthy Evaluator agent. 
The Evaluator tags the samples with dataset ``membership labels'': {\em Defender} or {\em Reserved}. 
Then, the Evaluator performs repeated rounds, consisting in randomly selecting one Defender sample $d$ and one Reserved sample $r$,
and giving to a \oracle an almost perfect attack dataset $\mathcal{D}_A=\mathcal{D}_D\sminus\{\texttt{membership}(d)\} \cup \mathcal{D}_R\sminus\{\texttt{membership}(r)\}$, removing only the membership labels of the two selected samples. The two unlabeled samples are referred to as $u_1$ and $u_2$, with each being equally likely to be from the Defender dataset.
We refer to this procedure as  ``Leave Two Unlabeled'' (LTU), see Figure~\ref{fig:teaser}. The \oracle also has access to the Defender trainer $\mathcal{T}_D$ (with all its hyper-parameter settings), and the trained Defender model $\mathcal{M}_D$. He is tasked to correctly predict which of the two samples $d$ and $r$ belongs to $\mathcal{D}_D$ (independently for each LTU round, forgetting everything at the end of a round). 

We use the {\em LTU membership classification accuracy} $A_{ltu}$ from $N$ independent LTU rounds (as defined above), to define a {\bf global privacy score} as:
\begin{align}\label{equation:privacy}
\texttt{Privacy} = &\min\{2~(1 - A_{ltu}), 1\} \notag \\
                   &\pm 2 \sqrt{A_{ltu}(1-A_{ltu})/N} ~,
\end{align}
where the error bar is an estimator of the standard error of the mean (approximating the Binomial law with the Normal law, see e.g. \cite{guyon-1998}).
The weaker the performance of the \oracle ($A_{ltu} \simeq 0.5$ for random guessing), the larger \texttt{Privacy}, and the better $\mathcal{M}_D$ should be protected from attacks.
We can also determine an {\bf individual membership inference privacy score} for any sample $d \in \mathcal{D}_{D}$ by using that sample for all $N$ rounds, and only drawing $r\sim \mathcal{D}_{R}$ at random\footnote{Similarly, we can determine an individual non-membership inference privacy score for any sample $r \in \mathcal{D}_{R}$ by using that sample for all $N$ rounds, and only drawing $d$ at random.} (see example in Appendix C).

The Evaluator also uses $\mathcal{D}_{uE} = \mathcal{D}_R$ to evaluate the {\bf utility of the Defender model} $\mathcal{M}_D$. We focus on multi-class classification, and measure utility with the {\em classification accuracy} $A_D$ of $\mathcal{M}_D$, defining utility as:
\begin{align}
    \texttt{Utility} = & \max\{(c~A_D - 1)/(c-1), 0\} \notag \\
    & \pm c \sqrt{A_D(1-A_D)/|\mathcal{D}_{R}|}~,
\label{equation:utility}
\end{align}
where $c$ is the number of classes.

While the \oracle is all knowledgeable, we still need to endow it with an algorithm to make membership predictions. In Figure \ref{fig:taxonomy} we propose a taxonomy of LTU Attackers. 
In each LTU round, let $u_1$ and $u_2$ be the samples that were deprived of their labels. The taxonomy has 2 branches:
\begin{itemize}
    \item {\bf  Attack on $\mathcal{M}_D$ alone}: (1) Simply use a generalization {\bf Gap-attacker}, which classifies $u_1$ as belonging to $\mathcal{D}_D$ if the loss function of $\mathcal{M}_D(u_1)$ is smaller than that of  $\mathcal{M}_D(u_2)$ (works well if  $\mathcal{M}_D$ overfits $\mathcal{D}_D$); or, 
    (2) train a {\bf $\mathcal{M}_D$-attacker} $\mathcal{M}_A$ to predict membership, using as input any internal state or the output of $\mathcal{M}_D$, and using $\mathcal{D}_A$ as training data. Then use $\mathcal{M}_A$ to predict the labels of $u_1$ and $u_2$.
    
    \item {\bf  Attack on $\mathcal{M}_D$ and $\mathcal{T}_D$}: Depending on whether the Defender trainer $\mathcal{T}_D$ is a white-box from which gradients can be computed, define  $\mathcal{M}_A$ by: (3) Training two mock Defender models $\mathcal{M}_1$ and $\mathcal{M}_2$, one using $(\mathcal{D}_D\sminus\{d\})\cup \{u_1\}$ and the other using $(\mathcal{D}_D\sminus\{d\})\cup \{u_2\}$, with the trainer $\mathcal{T}_D$. If $\mathcal{T}_D$ is deterministic and independent of sample ordering, either $\mathcal{M}_1$ or $\mathcal{M}_2$ should be identical to $\mathcal{M}_D$, and otherwise one of them should be ``closer'' to $\mathcal{M}_D$. The sample corresponding to the model closest to $\mathcal{M}_D$ is classified as being a member of $\mathcal{D}_D$. (4) Performing one gradient learning step with either $u_1$ or $u_2$ using $\mathcal{T}_D$, starting from the trained model $\mathcal{M}_D$, and compare the gradient norms.
\end{itemize}
A variety of Defender strategies might be considered:
\begin{itemize}
    \item Applying over-fitting prevention (regularization) to $\mathcal{T}_D$.
    \item Applying Differential Privacy algorithms to $\mathcal{T}_D$.
    \item Training $\mathcal{T}_D$ in a semi-supervised way (with transfer learning) or using synthetic data (generated with a simulator trained with a subset of $\mathcal{D}_D$).
    \item Modifying $\mathcal{T}_D$ to optimize both utility and privacy.
\end{itemize}


\section{Theoretical analysis of na\"ive attackers}
We present several theorems outlining weaknesses of the Defender that are particularly easy to exploit by a black-box LTU Attacker, not requiring training a sophisticated attack model $\mathcal{M}_A$ (we refer to such attackers as ``na\"ive''). 
First, we prove, in the context of the LTU procedure, theorems related to an already known result {\bf connecting privacy and over-fitting}: Defender trainers that overfit the Defender data lend themselves to easy attacks  \cite{yeom2018privacy}. The attacker can simply exploit the loss function of the Defender (which should be larger on Reserved data than on Defender data).
The last theorem concerns deterministic trainers  $\mathcal{T}_D$: We show that the \oracle can defeat them with 100\% accuracy, under mild assumptions. Thus {\bf Defenders must introduce some randomness in their training algorithm to be robust against such attacks} \cite{Dwork2017}. 

Throughout this analysis, we use the fact that our LTU methodology simplifies the work for the \oracle since it is always presented with pairs of samples for which exactly one is in the Defender data. This can give it very simple attack strategies.
For example, for any real valued function $f(x)$, with $x \in \mathcal{D}_S$, let $r$ be drawn uniformly from $\mathcal{D}_R$ and $d$ be drawn uniformly from $\mathcal{D}_D$, and define:

\begin{align}
p_R &= \mathop{Pr}_{\substack{u_1 \sim \mathcal{D}_R \\ u_2 \sim \mathcal{D}_D}}[f(u_1)>f(u_2)] \\
p_D &= \mathop{Pr}_{\substack{u_1 \sim \mathcal{D}_R \\ u_2 \sim \mathcal{D}_D}}[f(u_1)<f(u_2)]
\end{align}

Thus $p_R$ is the probability that discriminant function $f$ ``favors'' Reserved data while $p_D$ is the probability with which it favors the Defender data. $p_R  > p_D$ occurs if for a larger number of random pairs $f(x)$ is larger for Reserved data than for Defender data. If the probability of a tie is zero, then $p_R+p_D=1$.


\begin{theorem}\label{thm:pairwise}
If there is any function $f$ for which $p_R  > p_D$, a \oracle exploiting that function can achieve an accuracy $A_{ltu} \ge \frac{1}{2} + \frac{1}{2}(p_R - p_D)$.
\end{theorem}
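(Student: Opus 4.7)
The plan is to exhibit an explicit, naïve attack strategy using only the function $f$ and then compute its expected accuracy directly. Concretely, on each LTU round the Attacker is presented with the ordered pair $(u_1,u_2)$; the strategy is to declare the sample with the \emph{smaller} value of $f$ to be the Defender sample and the other to be the Reserved sample, breaking ties uniformly at random. This is a well-defined decision rule that uses no training and no access to $\mathcal{M}_A$, so it falls within the na\"ive regime.

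Next I would condition on which of $u_1,u_2$ is actually the Reserved draw. By the LTU protocol the ordering is uniformly random, so the two cases $\{u_1\sim\mathcal{D}_R,u_2\sim\mathcal{D}_D\}$ and $\{u_1\sim\mathcal{D}_D,u_2\sim\mathcal{D}_R\}$ each have probability $\tfrac{1}{2}$. In the first case, the rule is correct on the (strict) event $f(u_1)>f(u_2)$, which by definition has probability $p_R$, wrong on the event $f(u_1)<f(u_2)$ of probability $p_D$, and correct with probability $\tfrac{1}{2}$ on the tie event of probability $1-p_R-p_D$. By the symmetry of the definitions under relabelling $u_1\leftrightarrow u_2$, the second case yields exactly the same three probabilities. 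Thus per-round success probability is
\begin{equation*}
A_{ltu} \;=\; p_R + \tfrac{1}{2}\bigl(1-p_R-p_D\bigr) \;=\; \tfrac{1}{2} + \tfrac{1}{2}(p_R - p_D),
\end{equation*}
which meets the stated lower bound with equality. The inequality $\ge$ in the theorem leaves room for smarter tie-breaking or additional side information the Attacker might exploit; a random tie-break already suffices to attain the bound.

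There is essentially no technical obstacle: once the rule ``smaller $f$ is Defender'' is fixed and the uniform ordering of $(u_1,u_2)$ is used, the result is a one-line probability computation. The only subtle bookkeeping is verifying that the tie event contributes $\tfrac{1}{2}$ (uniformly at random tie-breaking) rather than $0$, so that the bound remains valid when $p_R+p_D<1$; and checking that the symmetric swap of $u_1$ and $u_2$ does not alter $p_R$ or $p_D$, which is immediate because $p_R,p_D$ are defined in terms of the distributions $\mathcal{D}_R,\mathcal{D}_D$ rather than the round-level indexing.
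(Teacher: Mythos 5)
Your proposal is correct and follows essentially the same route as the paper's proof: predict that the sample with the smaller value of $f$ is the Defender sample, break ties with a fair coin, and compute $A_{ltu} = p_R + \frac{1}{2}(1-p_R-p_D) = \frac{1}{2} + \frac{1}{2}(p_R-p_D)$. Your additional remarks on the symmetry under swapping $u_1$ and $u_2$ and on the tie event are just a more explicit writing-out of the same one-line computation.
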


\begin{proof}
A simple attack strategy would be predict that the unlabeled sample with the smaller value of $f(x)$ belongs to the {\em Defender} data, with ties (i.e. when $f(u_1)=f(u_2)$) decided by tossing a fair coin. This strategy would give a correct prediction when $f(r)>f(d)$, which occurs with probability $p_R$, and would be correct half of the time when $f(r)=f(d)$, which occurs with probability $(1-(p_r+p_d))$. This gives a classification accuracy:
\begin{align}
A_{ltu} = p_R + \frac{1}{2}(1- p_R- p_D)= \frac{1}{2} + \frac{1}{2}(p_R - p_D)~.
\end{align}
\end{proof}

This is similar to the threshold adversary of \cite{yeom2018privacy}, except that the \oracle does not need to know the exact conditional distributions, since it can discriminate pairwise. The most obvious candidate function $f$ is the loss function used to train $\mathcal{M}_D$ (we call this a na\"ive attacker), but the \oracle can mine $\mathcal{D}_A$ to potentially find more discriminative functions, or multiple functions to bag, and use $\mathcal{D}_A$ to compute very good estimates for $p_R$ and $p_D$. We verify that an \oracle using an $f$ function making perfect membership predictions (\eg having the knowledge of the {\em entire} Defender dataset and using the nearest neighbor method) would get $A_{ltu} = 1$, if there are no ties. Indeed, in that case, $p_R=1$ and $p_D=0$.


In our second theorem, we show that the \oracle can attain an analogous lower bound on accuracy connected to overfitting as the bounded loss function (BLF) adversary of \cite{yeom2018privacy}.
\begin{theorem}\label{thm:blf}
If the loss function $\ell(x)$ used to train the Defender model is bounded for all $x$, without loss of generality $0 \le \ell(x) \le 1$ (since loss functions can always be re-scaled), and if $e_R$, the expected value of the loss function on the Reserved data, is larger than $e_D$, the expected value of the loss function on the Defender data, then a lower bound on the accuracy of the \oracle is given by the following function of the generalization error gap $e^{}_R - e^{}_D$:
\begin{align}
A_{ltu} \ge \frac{1}{2} + \frac{1}{2} (e^{}_R - e^{}_D)
\end{align}
\end{theorem}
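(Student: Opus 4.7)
The plan is to exhibit an explicit randomized LTU attacker whose expected accuracy equals $\tfrac{1}{2}+\tfrac{1}{2}(e_R-e_D)$; since the all-knowledgeable \oracle can in particular deploy this strategy, its accuracy lower-bounds $A_{ltu}$. A direct application of Theorem~\ref{thm:pairwise} with $f=\ell$ will not suffice: e.g.\ if $\ell(r)=0.6$ deterministically while $\ell(d)\in\{0,1\}$ uniformly, then $p_R-p_D=0$ even though $e_R-e_D=0.1>0$. A \emph{deterministic} pairwise comparison of losses therefore cannot in general convert a mean-loss gap into accuracy, so randomization is essential---mirroring the bounded-loss-function adversary of \cite{yeom2018privacy}, adapted here to the paired LTU setting.

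The attack I would propose is as follows. In each LTU round, draw an independent threshold $t\sim\mathrm{Uniform}[0,1]$, form $b_i=\mathbf{1}[\ell(u_i)>t]$ for $i\in\{1,2\}$, predict the sample with $b_i=0$ as the Defender sample when $b_1\neq b_2$, and flip a fair coin otherwise. The one place the hypothesis $0\le\ell(x)\le 1$ is actually used is the identity $\Pr_t[\ell(x)>t]=\ell(x)$ for every fixed $x$.

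Letting $r\sim\mathcal{D}_R$ and $d\sim\mathcal{D}_D$ denote the actual Reserved and Defender samples (regardless of their random index), I would then compute
\begin{align*}
\Pr[\text{correct}\mid r,d]
&= \Pr[b_r{=}1,b_d{=}0\mid r,d]+\tfrac{1}{2}\Pr[b_r{=}b_d\mid r,d]\\
&= \tfrac{1}{2}+\tfrac{1}{2}\bigl(\Pr[b_r{=}1\mid r]-\Pr[b_d{=}1\mid d]\bigr)\\
&= \tfrac{1}{2}+\tfrac{1}{2}\bigl(\ell(r)-\ell(d)\bigr),
\end{align*}
where the second equality is elementary bookkeeping after marginalizing (and holds even though $b_r$ and $b_d$ are coupled through the shared $t$), and the third invokes the identity above. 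Taking expectations over $r$ and $d$ independently yields $A_{ltu}\ge\tfrac{1}{2}+\tfrac{1}{2}(e_R-e_D)$, which proves the theorem.

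The main obstacle is conceptual rather than technical: one must notice that the most tempting instantiation of Theorem~\ref{thm:pairwise}---namely $f=\ell$---is too coarse, and that a uniform random threshold is precisely the device needed to turn a bounded mean loss into a pairwise advantage. A minor pitfall in writing up the calculation is remembering that $b_r$ and $b_d$ share the same $t$ and so are not independent; the marginalization identity used above neatly circumvents any need to spell out their joint distribution.
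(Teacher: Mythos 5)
Your proof is correct and uses essentially the same idea as the paper: a randomized attack that exploits the identity $\Pr_{t\sim U[0,1]}[t<\ell(x)]=\ell(x)$ so that the expected accuracy becomes $\tfrac{1}{2}+\tfrac{1}{2}(e_R-e_D)$ by linearity. The only (cosmetic) difference is that the paper's attacker ignores $u_2$ and simply predicts $u_1\in\mathcal{D}_R$ with probability $\ell(u_1)$, whereas you threshold both losses against a shared uniform draw and break ties with a coin; both reduce to the same conditional accuracy $\tfrac{1}{2}+\tfrac{1}{2}\bigl(\ell(r)-\ell(d)\bigr)$, and your opening observation that $f=\ell$ in Theorem~\ref{thm:pairwise} does not suffice matches the paper's own non-dominance discussion in Appendix~B.
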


\begin{proof}
If the order of the pair $(u_1,u_2)$ is random and the loss function $\ell(x)$ is bounded by $0 \le \ell(x)\le 1$, then the \oracle could predict $u_1 \in \mathcal{D}_R$ with probability $\ell(u_1)$, by drawing $z \sim U(0,1)$ and predicting $u_1 \in \mathcal{D}_R$ if $z < \ell(u_1)$, and $u_1 \in \mathcal{D}_D$ otherwise. This gives the desired lower bound, derived in more detail in Appendix A:
\begin{align}
A_{ltu} &= \frac{1}{2}\mathop{Pr}_{u_1 \sim \mathcal{D}_R}[z<\ell(u_1)] + \frac{1}{2}\left( 1 - \mathop{Pr}_{u_1 \sim \mathcal{D}_D}[z<\ell(u_1)]\right) \notag \\
&= \frac{1}{2}\mathop{\mathbb{E}}_{u_1 \sim \mathcal{D}_R}\left[\ell(u_1)\right] + \frac{1}{2} - \frac{1}{2}\mathop{\mathbb{E}}_{u_1 \sim \mathcal{D}_D}\left[ \ell(u_1)\right] \notag \\
&= \frac{1}{2} + \frac{e^{}_R - e^{}_D}{2} \\
& \text{ where } e^{}_R \coloneqq \mathop{\mathbb{E}}_{u_1 \sim \mathcal{D}_R}\left[\ell(u_1)\right] \text{ and } e^{}_D \coloneqq \mathop{\mathbb{E}}_{u_1 \sim \mathcal{D}_D}\left[\ell(u_1)\right] \notag
\end{align}
\end{proof}

This is only a lower bound on the accuracy of the attacker, connected to the main difficulty in machine learning - overfitting of the loss function. Other attack strategies may be more accurate. However, neither of the attack strategies in Theorems 1 and 2 is dominant over the other: shown in Appendix B. The strategy in Theorem 1 is more widely applicable, since it does not require the function to be bounded. 

In the special case when the loss function used to train the Defender model is the 0-1 loss, and that is used to attack (\ie $f=\ell$), the strategies in Theorems 1 and 2 are different, but have the same accuracy:

\begin{align}
p_R &= \mathop{Pr}_{u \sim \mathcal{D}_R }[\ell(u)=1] (1-\mathop{Pr}_{u \sim \mathcal{D}_D }[\ell(u)=1]) \notag \\
p_D &= (1-\mathop{Pr}_{u \sim \mathcal{D}_R }[\ell(u)=1]) \mathop{Pr}_{u \sim \mathcal{D}_D }[\ell(u)=1] \notag \\
p_R - p_D &= \mathop{Pr}_{u_1 \sim \mathcal{D}_R }[\ell(u)=1] - \mathop{Pr}_{u \sim \mathcal{D}_D }[\ell(u)=1] \notag \\
&= e^{}_R - e^{}_D \notag
\end{align}

Note that $u$ is a dummy variable. The first line of the derivation is due to the fact that the only way the loss on the Reserved set can be greater than the loss on the Defender set is if the loss on the Reserved set is 1, which has probability $\mathop{Pr}_{u \sim \mathcal{D}_R }[\ell(u)=1]$, and the loss on the Defender set is zero, which has probability $1-\mathop{Pr}_{u \sim \mathcal{D}_D }[\ell(u)=1]$. The second line is derived similarly.


\begin{theorem}\label{thm:deterministic} If the Defender trainer $\mathcal{T}_D$ is deterministic, invariant to the order of the training data, and injective, then the \oracle has an optimal attack strategy, which achieves perfect accuracy.
\end{theorem}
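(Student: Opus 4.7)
The plan is to instantiate Attack Strategy (3) from the taxonomy, namely the ``two mock models'' retraining attack, and verify that under the three hypotheses it identifies the hidden Defender sample without error. The key observation is that \oracle knows $\mathcal{D}_D^{-} := \mathcal{D}_D \setminus \{d\}$ exactly (every Defender sample except $d$ carries its Defender label), along with the unlabeled candidates $u_1, u_2$, exactly one of which equals $d$. So the true Defender training set must be one of the two multisets $S_i := \mathcal{D}_D^{-} \cup \{u_i\}$, $i \in \{1,2\}$, and \oracle can tell which by retraining, since it has $\mathcal{T}_D$ with all of its settings.

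Concretely, I would have \oracle compute $\mathcal{M}_i := \mathcal{T}_D(S_i)$ for $i \in \{1,2\}$ and then predict that $u_i$ is the Defender sample exactly when $\mathcal{M}_i = \mathcal{M}_D$. The correctness argument then has three short steps matching the three hypotheses. First, order-invariance makes $\mathcal{T}_D$ well defined as a function on (multi)sets of samples, so ``$\mathcal{T}_D(\mathcal{D}_D)$'' is unambiguous; determinism then gives $\mathcal{T}_D(\mathcal{D}_D) = \mathcal{M}_D$ on the nose, so whichever $i$ satisfies $u_i = d$ automatically satisfies $\mathcal{M}_i = \mathcal{M}_D$. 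Second, for the other index $j$, the multiset $S_j = \mathcal{D}_D^{-} \cup \{u_j\}$ differs from $\mathcal{D}_D$ (it replaces $d$ by $r$). Third, injectivity of $\mathcal{T}_D$ on multisets then forces $\mathcal{M}_j \neq \mathcal{T}_D(\mathcal{D}_D) = \mathcal{M}_D$, so exactly one candidate matches and the prediction rule is always correct, giving $A_{ltu}=1$.

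The main obstacle is not computational but definitional: one must state the injectivity hypothesis carefully as injectivity of $\mathcal{T}_D$ regarded as a map on multisets of training samples (which is meaningful precisely because order-invariance has already been assumed). One also needs a mild distinctness assumption so that $S_1 \neq S_2$ as multisets, which amounts to $r \neq d$ and $r \notin \mathcal{D}_D^{-}$; both are immediate from the paper's standing setup that $\mathcal{D}_D$ and $\mathcal{D}_R$ are disjoint subsets of a population of distinct source samples. Once these bookkeeping points are pinned down, the proof reduces to a one-line application of the three hypotheses, and no further calculation is required.
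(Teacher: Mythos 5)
Your proposal is correct and follows essentially the same route as the paper's own proof: instantiate attack strategy (3), retrain $\mathcal{T}_D$ on $(\mathcal{D}_D\setminus\{d\})\cup\{u_i\}$ for $i=1,2$, and use determinism plus order-invariance to get a match for the true index and injectivity plus $\mathcal{D}_D\cap\mathcal{D}_R=\emptyset$ to rule out the other. Your additional care in phrasing injectivity as a property of $\mathcal{T}_D$ viewed as a map on multisets is a reasonable tightening of the paper's informal statement, but it does not change the argument.
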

\begin{proof}
The proof uses the fact that the \oracle knows all of the Defender dataset except one sample, and knows that the missing sample is either $u_1$ or $u_2$.  Therefore, the attack strategy is to create two models, one trained on $u_1$ combined with the rest of the Defender dataset, and the other trained on $u_2$ combined with the rest of the Defender dataset. Since the Defender trainer is deterministic, one of those two models will match the Defender model, revealing which unlabeled sample belonged in the Defender dataset.

Formally, denote the subset of $\mathcal{D}_A$ labeled ``Defender'' as $\mathcal{D}_D\sminus\{d\}$, and the two membership unlabeled samples as $u_1$ and $u_2$. The attacker can use the Defender trainer with the same hyper-parameters on $\left(\mathcal{D}_D\sminus\{d\} \right) \cup \{u_1\}$ to produce model $\mathcal{M}_1$ and on $\left(\mathcal{D}_D\sminus\{d\} \right) \cup \{u_2\}$ to produce model $\mathcal{M}_2$.

By definition of the \oracle, the missing sample $d$ is either $u_1$ or $u_2$, and $\mathcal{D}_D \cap \mathcal{D}_R = \emptyset$, so $u_1 \ne u_2$.
There are two possible cases. If $u_1 = d$, then $\mathcal{D}_D = \left(\mathcal{D}_D\sminus\{d\} \right) \cup \{u_1\}$, so that 
$\mathcal{M}_1 = \mathcal{M}_{D}$, since $\mathcal{T}_D$ is deterministic and invariant to the order of the training data. However, $\mathcal{D}_D \ne \left(\mathcal{D}_D\sminus\{d\} \right) \cup \{u_2\}$, since $u_2 \ne u_1$, so 
$\mathcal{M}_2 \ne \mathcal{M}_{D}$,
since $\mathcal{T}_D$ is also injective. Therefore, the \oracle can know, with no uncertainty, that $u_1$ has membership label ``Defender'' and $u_2$ has membership label ``Reserved''. The other case, for $u_2 = d$, has a symmetric argument.
\end{proof}
Under the hypotheses above, the \oracle achieves the optimal Bayesian classifier using:
\begin{align*}
Pr[u_i \in \mathcal{D}_{D} | \mathcal{M}_{i} = \mathcal{M}_{D}] &= 1 \\
Pr[u_i \in \mathcal{D}_{R} | \mathcal{M}_{i} = \mathcal{M}_{D}] &= 0 \\
Pr[u_i \in \mathcal{D}_{D} | \mathcal{M}_{i} \ne \mathcal{M}_{D}] &= 0 \\
Pr[u_i \in \mathcal{D}_{R} | \mathcal{M}_{i} \ne \mathcal{M}_{D}] &= 1 \\
\end{align*}


\section{Data and experimental setting}

We are using two datasets in our experiments: CIFAR-10~\cite{krizhevsky2009learning}, and QMNIST \cite{qmnist-2019}. CIFAR-10 is an object classification dataset with 10 different classes, well-known as a benchmark for membership inference attacks \cite{rahman2018membership,hilprecht2019reconstruction,shokri2017membership}.
QMNIST \cite{qmnist-2019} is a handwritten digit recognition dataset, similarly preprocessed as the well-known MNIST \cite{lecun1998gradient}, but including the whole original NIST Special Database 19\footnote{\url{https://www.nist.gov/srd/nist-special-database-19}.} data (402953 images). 
QMNIST includes meta-data that MNIST was deprived of, including writer IDs and its origin (high-school students or Census Bureau employees), which could be used in future studies of attribute or property inference attack. They are not used in this work. 

To speed up our experiments, we preprocessed the data using a backbone neural network pretrained on some other dataset, and used the representation of the second last layer of the network. For QMNIST we used VGG19 \cite{simonyan2014very} pretrained on Imagenet \cite{deng2009imagenet}.
For CIFAR-10, we rely on Efficient-netv2 \cite{tan2021efficientnetv2} pretrained on Imagenet21k and finetuned on CIFAR-100.

The data were then split as follows:
The 402953 QMNIST images were shuffled, then separated into 200000 samples for Defender data and 202953 for Reserved data.  
The CIFAR-10 data were also shuffled and split evenly (30000/30000 approximately).


\section{Results}
\label{sec:results}

\subsection{Black-box attacker}
We trained and evaluated various algorithms of the \href{https://scikit-learn.org/stable/}{scikit-learn library} as Defender model and evaluated the Utility and Privacy, based on two subsets of data: 1) 1600 random examples from the Defender (training) data and 2) 1600 random examples from the Reserved data (used to evaluate Utility and Privacy). We performed $N=100$ independent LTU rounds and then computed  Privacy based on the LTU membership classification accuracy through Equation \ref{equation:privacy}. The Utility of the model was obtained with Equation \ref{equation:utility}. We used a {\bf Black-box \oracle} (number (3) in Figure~\ref{fig:taxonomy}).
The results shown in Table \ref{tab:bbox_resu} are averaged over 3 trials\footnote{Code is available at \url{https://github.com/JiangnanH/ppml-workshop/blob/master/generate_table_v1.py}.}. 
The first few lines (gray shaded) are {\bf deterministic methods} (whose trainer yields to the same model regardless of random seeds and sample order). For these lines, consistent with Theorem~\ref{thm:deterministic}, Privacy is zero, in all columns.\footnote{ Results may vary depending upon which scikit-learn output method is used (\texttt{predict\_proba()}, \texttt{decision\_function()}, \texttt{density\_function()}), or \texttt{predict()}. To achieve zero Privacy, consistent with the theory, the method \texttt{predict()} should be avoided.}
The algorithms use default scikit-learn hyper-parameter values. 
In the first two result columns, the Defender trainers are forced to be deterministic by seeding all random number generators. In the first column, the sample order is fixed to the order used by the Defender trainer, while in the second one it is not. Privacy in the first column is near zero, consistent with the theory. In the second column, this is also verified for methods independent of sample order.
The third result column corresponds to varying the random seed, hence {\bf algorithms including some level of randomness have an increased level of privacy}.


\begin{table}[h]

\caption{\label{tab:bbox_resu} \footnotesize  {\bf Utility and Privacy on QMNIST and CIFAR-10 of different scikit-learn models} with three levels of randomness: Original sample order + Fixed random seed (no randomness); Random sample order + Fixed random seed; Random sample order + Random seed. The Defender data and Reserved data have both 1600 examples. All numbers shown in the table have {\em at least} two significant digits (standard error lower than 0.004). For model implementations, we use scikit-learn (version 0.24.2) with default values. Results with Utility or Privacy $>0.90$ are highlighted and those meeting both criteria are underlined. Shaded in  gray: fully deterministic models with Privacy$\equiv 0$.}
\vspace{1mm}
\centering

\setlength{\tabcolsep}{0.5mm}{} 
\begin{tabular}{|p{3cm}|p{1.4cm}|p{1.4cm}|p{1.4cm}|p{1.4cm}|}
\hline
\cline{1-4}
~\newline {\bf QMNIST}\newline Utility $|$ Privacy  & Orig. order + Seeded & Rand. order  + Seeded & Not Seeded  \\
\hline
\rowcolor{gray!50}
\hr{https://scikit-learn.org/stable/modules/generated/sklearn.linear_model.LogisticRegression.html}{Logistic lbfgs} & {\bf 0.92}$|$0.00 & {\bf 0.91}$|$0.00 & {\bf 0.91}$|$0.00 \\

\rowcolor{gray!50}
\hr{https://scikit-learn.org/stable/modules/generated/sklearn.linear_model.RidgeClassifier.html}{Bayesian ridge} & {\bf 0.92}$|$0.00 & {\bf 0.92}$|$0.00 & 0.89$|$0.00 \\

\rowcolor{gray!50}
\hr{https://scikit-learn.org/stable/modules/generated/sklearn.naive_bayes.GaussianNB.html}{Naive Bayes} & 0.70$|$0.00 & 0.70$|$0.00 & 0.70$|$0.00 \\

\rowcolor{gray!50}
\hr{https://scikit-learn.org/stable/modules/generated/sklearn.svm.SVC.html}{SVC} & {\bf 0.91}$|$0.00 & {\bf 0.91}$|$0.00 & 0.88$|$0.00 \\


\hr{https://scikit-learn.org/stable/modules/generated/sklearn.neighbors.KNeighborsClassifier.html}{\color{red} KNN*} & 0.86$|$0.27 & 0.86$|$0.27 &  0.83$|$0.18 \\

\hr{https://scikit-learn.org/stable/modules/generated/sklearn.svm.LinearSVC.html}{\color{red} LinearSVC} &  {\bf 0.92}$|$0.00 &  {\bf 0.92}$|$0.69 &  {\bf 0.91}$|$0.63 \\

\hr{https://scikit-learn.org/stable/modules/generated/sklearn.linear_model.SGDClassifier.html}{\color{red} SGD SVC} &  {\bf 0.90}$|$0.03 & \underline{\bf 0.92}$|$\underline{\bf 1.00} & 0.89$|${\bf 1.00} \\

\hr{https://scikit-learn.org/stable/modules/generated/sklearn.neural_network.MLPClassifier.html}{MLP} & {\bf 0.90}$|$0.00 & \underline{\bf 0.90}$|$\underline{\bf 0.97} & 0.88$|${\bf 0.93} \\

\hr{https://scikit-learn.org/stable/modules/generated/sklearn.linear_model.Perceptron.html}{Perceptron} & {\bf 0.90}$|$0.04 & \underline{\bf 0.91}$|$\underline{\bf 1.00} & \underline{\bf 0.92}$|$\underline{\bf 1.00} \\

\hr{https://scikit-learn.org/stable/modules/generated/sklearn.ensemble.RandomForestClassifier.html}{Random Forest} & 0.88$|$0.00 & 0.88$|${\bf 0.99} & 0.85$|${\bf 1.00} \\

\hline
\end{tabular}

\vspace{0.1cm}

\setlength{\tabcolsep}{0.5mm}{} 
\begin{tabular}{|p{3cm}|p{1.4cm}|p{1.4cm}|p{1.4cm}|p{1.4cm}|}
\hline
\cline{1-4}
~\newline {\bf CIFAR-10}\newline Utility $|$ Privacy  & Orig. order + Seeded & Rand. order  + Seeded & Not Seeded \\
\hline
\rowcolor{gray!50}
\hr{https://scikit-learn.org/stable/modules/generated/sklearn.linear_model.LogisticRegression.html}{Logistic lbfgs} & {\bf 0.95}$|$0.00 & {\bf 0.95}$|$0.00 & {\bf 0.95}$|$0.00 \\
\rowcolor{gray!50}
\hr{https://scikit-learn.org/stable/modules/generated/sklearn.linear_model.RidgeClassifier.html}{Bayesian ridge} & 0.91$|$0.00 & 0.90$|$0.00 & 0.90$|$0.00 \\
\rowcolor{gray!50}
\hr{https://scikit-learn.org/stable/modules/generated/sklearn.naive_bayes.GaussianNB.html}{Naive Bayes} & 0.89$|$0.00 & 0.89$|$0.01 &  0.89$|$0.00 \\
\rowcolor{gray!50}
\hr{https://scikit-learn.org/stable/modules/generated/sklearn.svm.SVC.html}{SVC} & {\bf 0.95}$|$0.00 & 0.94$|$0.00 & {\bf 0.95}$|$0.00 \\

\hr{https://scikit-learn.org/stable/modules/generated/sklearn.neighbors.KNeighborsClassifier.html}{\color{red} KNN*} & 0.92$|$0.44 & 0.91$|$0.49 &  0.92$|$0.49 \\

\hr{https://scikit-learn.org/stable/modules/generated/sklearn.svm.LinearSVC.html}{\color{red} LinearSVC} &  {\bf 0.95}$|$0.00 &  {\bf 0.95}$|$0.26 &  {\bf 0.95}$|$0.22 \\

\hr{https://scikit-learn.org/stable/modules/generated/sklearn.linear_model.SGDClassifier.html}{\color{red} SGD SVC} &  0.94$|$0.32 & 0.94$|${\bf 0.98} & 0.93$|${\bf 0.99} \\

\hr{https://scikit-learn.org/stable/modules/generated/sklearn.neural_network.MLPClassifier.html}{MLP} & {\bf 0.95}$|$0.00 & 0.94$|${\bf 0.98} & \underline{\bf 0.95}$|$\underline{\bf 0.97} \\

\hr{https://scikit-learn.org/stable/modules/generated/sklearn.linear_model.Perceptron.html}{Perceptron} & 0.94$|$0.26 & 0.94$|${\bf 1.00} & 0.93$|${\bf 0.96} \\

\hr{https://scikit-learn.org/stable/modules/generated/sklearn.ensemble.RandomForestClassifier.html}{Random Forest} & 0.92$|$0.00 & 0.93$|${\bf 0.99} & 0.91$|$0.92 \\

\hline
\end{tabular}

\vspace{-0.5cm}
\end{table}

The results of Table \ref{tab:bbox_resu} show that there is no difference between the column 2 and 3; suggesting that, just with the randomness associated to altering the order of the training samples, is enough to make the strategy fails.
These results also expose one limitation of black-box attacks: example-based methods indicated in red (\eg SVC), which store examples in the model, obviously violate privacy. However, this is not detected by a black-box \oracle, if they are properly regularized and/or involve some degree or randomness. White-box attackers solve this problem.


\subsection{White-box attacker}
We implemented a white-box attacker based on gradient calculations (method (4) in Figure \ref{fig:taxonomy}). We evaluated the effect of the proposed attack with QMNIST on two types of Defender models: A Deep Neural Networks (DNN) trained with supervised learning or with unsupervised domain adaptation (UDA)~\cite{deepdaJindong}.\footnote{Code is available at \url{https://github.com/JiangnanH/ppml-workshop##white-box-attacker}.}

For supervised learning, we used ResNet50~\cite{he2016deep} as the backbone neural network, which is pre-trained on ImageNet~\cite{deng2009imagenet}. We then retrained all its layers on the Defender set of QMNIST. The results are reported in Table \ref{tab:DA}, line ``Supervised''. With the very large Defender dataset we are using for training (200000 examples), regardless of variations on regularization hyper-parameters, we could get ResNet50 to overfit. Consequently, both Utility and Privacy are good.

In an effort to still improve Privacy, we used Unsupervised Domain Adaptation (UDA). To that end, we use as source domains  a synthetic dataset, called Large-Fake-MNIST~\cite{sun2021omniprint}, which are similar to MNIST. Large-Fake-MNIST has 50000 white-on-black images for each digit, which results in 500000 images in total. The target domain is the Defender set of QMNIST. The chosen UDA method is DSAN~\cite{zhuDeepSubdomainAdaptation2021,deepdaJindong}, which optimizes the neural network with the sum of a cross-entropy loss (classification loss) and a local MMD loss (transfer loss)~\cite{zhuDeepSubdomainAdaptation2021}. We tried 3 variants of attacks of this UDA model. The simplest is the most effective: attack the model as if it were trained with supervised learning. Unfortunately, UDA did not yield improved performance. We attribute that to the fact that the supervised model under attack performs well on this dataset and already has a very good level of privacy.


\begin{table}[h]

\caption{\label{tab:DA} \footnotesize \bf Utility and Privacy of DNN ResNet50 Defender models trained on QMNIST.}
\vspace{1mm}
\centering
\setlength{\tabcolsep}{0.5mm}{} 
\begin{tabular}{|l|l|l|}
\hline
\cline{1-3}
Defender model  & Utility & Privacy \\
\hline
Supervised & \(1.00\pm 0.00\) & \(0.97\pm 0.03\) \\
\hline
Unsupervised Domain Adaptation  & \( 0.99\pm 0.00\) & \( 0.94\pm 0.03\) \\
\hline
\end{tabular}

\vspace{-0.5cm}
\end{table}


\section{Discussion and further work}
\label{sec:discussion}

Although an \oracle is all knowledgeable, it must make efficient use of available information to be powerful. We proposed a taxonomy 
based on information available or used (Figure~\ref{fig:taxonomy}). The most powerful Attackers use both the trained Defender model $\mathcal{M}_D$ and its trainer $\mathcal{T}_D$. 

When the Defender trainer $\mathcal{T}_D$ is a black box, like in our first set of experiments on scikit-learn algorithms, we see clear limitations of the \oracle which include the fact that it is not possible to diagnose whether the algorithm is example-based.

Unfortunately, white-box attacks cannot be conducted in a generic way, but must be tailored to the trainer (\eg  gradient descent algorithms for MLP). In contrast, black-box methods can attack $\mathcal{T}_D$ (and $\mathcal{M}_D$) regardless of mechanism. Still, we get necessary conditions for privacy protection by analyzing black-box methods. Both theoretical and empirical results using black-box attackers (on a broad range of algorithms of the scikit-learn library on the QMNIST and CIFAR-10 data),
indicate that Defender algorithms are vulnerable to a \oracle if it overfits the training Defender data or if it is deterministic. Additionally, the degree of stochasticity of the algorithm must be sufficient to obtain a desired level of privacy.

We explored white-box attacks neural networks trained with gradient descent. In our experiments on the large QMNIST dataset (200,000 training examples),
Deep CNNs such as ResNet seem to exhibit both good Utility and Privacy in their ``native form'', according to our white-box attacker. 
We were pleasantly surprised of our white box attack results, but, in light of the fact that other authors found similar networks vulnerable to attack~\cite{nasr2019comprehensive}, we conducted the following sanity check.
We performed the same supervised learning experiment by modifying \(20\%\) of the class labels (to another class label chosen randomly), in both the Defender set and Reserved set. Then we incited the neural network to overfit the Defender set. Although the training accuracy (on Defender data) was still nearly perfect, we obtained a loss of test accuracy (on Reserved data): $78\%$. According Theorem \ref{thm:blf}, this should result in a loss of privacy. This allowed us to verify that our white-box attacker correctly detected a loss of privacy. Indeed, we obtained a privacy of 0.55.

We are in the process of conducting comparison experiments between our white-box attacker and that of~\cite{nasr2019comprehensive}. However, their method does not easily lend itself to be used with the LTU framework, because it requires training a neural network for each LTU round (\ie on each $\mathcal{D}_A=\mathcal{D}_D\sminus\{\texttt{membership}(d)\} \cup \mathcal{D}_R\sminus\{\texttt{membership}(r)\}$). We are considering doing only one data split to evaluate privacy, with $\mathcal{D}_A=50\%~ \mathcal{D}_D~\cup~50\%~\mathcal{D}_R$ and using the rest of the data for privacy evaluation. However, we can still use the pairwise testing of the LTU methodology, \ie the evaluator queries the attacker with pairs of samples, one from the Defender data and the other from the Reserved data. In Appendix C, we show on an example that this results in an increased accuracy of the attacker.

In Appendix C, we use the same example to illustrate how we can visualize the privacy protection of individuals. Further work includes comparing this approach with
\cite{song2021systematic}.

Further work also includes testing \oracle on a wider variety of datasets and algorithms, varying the number of training examples, training new white-box attack variants to possibly increase the power of the attacker, and testing various means of improving the robustness of algorithms against attacks by \oracle. We are also in the process of designing a competition of membership inference attacks.


\section{Conclusion}
In summary, we presented an apparatus for evaluating the robustness of machine learning models (Defenders) against membership inference attack, involving an ``all knowledgeable'' \oracle. This attacker 
has access to the trained model of the Defender, its learning algorithm (trainer), all the Defender data used 
for training, {\em minus the label of one sample}, and all the {\em similarly distributed} non-training Reserved data (
used for evaluation), {\em minus the label of one sample}. The Evaluator repeats this Leave-Two-Unlabeled (LTU) procedure for many sample pairs, to compute the efficacy of the Attacker, whose charter is to predict the membership of the unlabeled samples (training or non-training data). We call such \oracle the LTU-attacker for short. The LTU framework helped us analyse privacy vulnerabilities both theoretically and experimentally.

The main conclusions of this paper are that a number of conditions are necessary for a Defender to protect privacy:
\begin{itemize}
    \item Avoid storing examples (a weakness of example-based method, such as Nearest Neighbors).
    \item Ensure that $p_R=p_D$ for all $f$, following Theorem \ref{thm:pairwise} ($p_R$ is the probability that discriminant function $f$ ``favors'' Reserved data while $p_D$ is the probability with which it favors the Defender data). 
    \item Ensure that $e_R=e_D$, following Theorem \ref{thm:blf} ($e_R$ is the expected value of the loss on Reserved data and $e_D$ on Defender data).
    \item Include some randomness in the Defender trainer algorithm, after Theorem \ref{thm:deterministic}.
\end{itemize}


\subsection*{Acknowledgements}
We are grateful to our colleagues Kristin Bennett and Jennifer He for stimulating discussion. This work is funded in part by the ANR (Agence Nationale de la Recherche, National Agency for Research) under AI chair of excellence HUMANIA, grant number ANR-19-CHIA-0022.

\clearpage


\bibliographystyle{aaai}
\bibliography{bibliography}

\begin{thebibliography}{}

\bibitem[\protect\citeauthoryear{Abadi \bgroup et al\mbox.\egroup
  }{2016}]{abadi2016deep}
Abadi, M.; Chu, A.; Goodfellow, I.; McMahan, H.~B.; Mironov, I.; Talwar, K.;
  and Zhang, L.
\newblock 2016.
\newblock Deep learning with differential privacy.
\newblock In {\em Proceedings of the 2016 ACM SIGSAC conference on computer and
  communications security},  308--318.

\bibitem[\protect\citeauthoryear{Chen \bgroup et al\mbox.\egroup
  }{2020}]{chen2020gan}
Chen, D.; Yu, N.; Zhang, Y.; and Fritz, M.
\newblock 2020.
\newblock Gan-leaks: A taxonomy of membership inference attacks against
  generative models.
\newblock In {\em Proceedings of the 2020 ACM SIGSAC Conference on Computer and
  Communications Security},  343--362.

\bibitem[\protect\citeauthoryear{Deng \bgroup et al\mbox.\egroup
  }{2009}]{deng2009imagenet}
Deng, J.; Dong, W.; Socher, R.; Li, L.-J.; Li, K.; and Fei-Fei, L.
\newblock 2009.
\newblock Imagenet: A large-scale hierarchical image database.
\newblock In {\em 2009 IEEE conference on computer vision and pattern
  recognition},  248--255.
\newblock Ieee.

\bibitem[\protect\citeauthoryear{Dwork \bgroup et al\mbox.\egroup
  }{2006}]{dwork2006calibrating}
Dwork, C.; McSherry, F.; Nissim, K.; and Smith, A.
\newblock 2006.
\newblock Calibrating noise to sensitivity in private data analysis.
\newblock In {\em Theory of cryptography conference},  265--284.
\newblock Springer.

\bibitem[\protect\citeauthoryear{Dwork \bgroup et al\mbox.\egroup
  }{2017}]{Dwork2017}
Dwork, C.; McSherry, F.; Nissim, K.; and Smith, A.
\newblock 2017.
\newblock Calibrating noise to sensitivity in private data analysis.
\newblock {\em Journal of Privacy and Confidentiality} 7(3):17–51.

\bibitem[\protect\citeauthoryear{Ganin \bgroup et al\mbox.\egroup
  }{2016}]{ganin2016domain}
Ganin, Y.; Ustinova, E.; Ajakan, H.; Germain, P.; Larochelle, H.; Laviolette,
  F.; Marchand, M.; and Lempitsky, V.
\newblock 2016.
\newblock Domain-adversarial training of neural networks.
\newblock {\em The journal of machine learning research} 17(1):2096--2030.

\bibitem[\protect\citeauthoryear{Guyon \bgroup et al\mbox.\egroup
  }{1998}]{guyon-1998}
Guyon, I.; Makhoul, J.; Schwartz, R.; and Vapnik, V.
\newblock 1998.
\newblock What size test set gives good error rate estimates?
\newblock {\em IEEE Transactions on Pattern Analysis and Machine Intelligence}
  20(1):52--64.

\bibitem[\protect\citeauthoryear{Hayes \bgroup et al\mbox.\egroup
  }{2018}]{hayes2018logan}
Hayes, J.; Melis, L.; Danezis, G.; and Cristofaro, E.~D.
\newblock 2018.
\newblock Logan: Membership inference attacks against generative models.

\bibitem[\protect\citeauthoryear{He \bgroup et al\mbox.\egroup
  }{2016}]{he2016deep}
He, K.; Zhang, X.; Ren, S.; and Sun, J.
\newblock 2016.
\newblock Deep residual learning for image recognition.
\newblock In {\em Proceedings of the IEEE conference on computer vision and
  pattern recognition},  770--778.

\bibitem[\protect\citeauthoryear{Hilprecht, Härterich, and
  Bernau}{2019}]{hilprecht2019reconstruction}
Hilprecht, B.; Härterich, M.; and Bernau, D.
\newblock 2019.
\newblock Reconstruction and membership inference attacks against generative
  models.

\bibitem[\protect\citeauthoryear{Huang \bgroup et al\mbox.\egroup
  }{2021}]{huang2021damia}
Huang, H.; Luo, W.; Zeng, G.; Weng, J.; Zhang, Y.; and Yang, A.
\newblock 2021.
\newblock Damia: Leveraging domain adaptation as a defense against membership
  inference attacks.
\newblock {\em IEEE Transactions on Dependable and Secure Computing}.

\bibitem[\protect\citeauthoryear{Jayaraman \bgroup et al\mbox.\egroup
  }{2020}]{jayaraman2020revisiting}
Jayaraman, B.; Wang, L.; Knipmeyer, K.; Gu, Q.; and Evans, D.
\newblock 2020.
\newblock Revisiting membership inference under realistic assumptions.
\newblock {\em arXiv preprint arXiv:2005.10881}.

\bibitem[\protect\citeauthoryear{Krizhevsky, Hinton, and
  others}{2009}]{krizhevsky2009learning}
Krizhevsky, A.; Hinton, G.; et~al.
\newblock 2009.
\newblock Learning multiple layers of features from tiny images.
\newblock Technical report.

\bibitem[\protect\citeauthoryear{LeCun \bgroup et al\mbox.\egroup
  }{1998}]{lecun1998gradient}
LeCun, Y.; Bottou, L.; Bengio, Y.; and Haffner, P.
\newblock 1998.
\newblock Gradient-based learning applied to document recognition.
\newblock {\em Proceedings of the IEEE} 86(11):2278--2324.

\bibitem[\protect\citeauthoryear{Li \bgroup et al\mbox.\egroup
  }{2013}]{li2013membership}
Li, N.; Qardaji, W.; Su, D.; Wu, Y.; and Yang, W.
\newblock 2013.
\newblock Membership privacy: a unifying framework for privacy definitions.
\newblock In {\em Proceedings of the 2013 ACM SIGSAC conference on Computer \&
  communications security},  889--900.

\bibitem[\protect\citeauthoryear{Liu \bgroup et al\mbox.\egroup
  }{2020}]{liu2020mace}
Liu, X.; Xu, Y.; Tople, S.; Mukherjee, S.; and Ferres, J.~L.
\newblock 2020.
\newblock Mace: A flexible framework for membership privacy estimation in
  generative models.
\newblock {\em arXiv preprint arXiv:2009.05683}.

\bibitem[\protect\citeauthoryear{Long, Bindschaedler, and
  Gunter}{2017}]{long2017towards}
Long, Y.; Bindschaedler, V.; and Gunter, C.~A.
\newblock 2017.
\newblock Towards measuring membership privacy.
\newblock {\em arXiv preprint arXiv:1712.09136}.

\bibitem[\protect\citeauthoryear{Nasr, Shokri, and
  Houmansadr}{2018}]{nasr2018machine}
Nasr, M.; Shokri, R.; and Houmansadr, A.
\newblock 2018.
\newblock Machine learning with membership privacy using adversarial
  regularization.
\newblock In {\em Proceedings of the 2018 ACM SIGSAC Conference on Computer and
  Communications Security},  634--646.

\bibitem[\protect\citeauthoryear{Nasr, Shokri, and
  Houmansadr}{2019}]{nasr2019comprehensive}
Nasr, M.; Shokri, R.; and Houmansadr, A.
\newblock 2019.
\newblock Comprehensive privacy analysis of deep learning: Passive and active
  white-box inference attacks against centralized and federated learning.
\newblock In {\em 2019 IEEE symposium on security and privacy (SP)},  739--753.
\newblock IEEE.

\bibitem[\protect\citeauthoryear{Rahman \bgroup et al\mbox.\egroup
  }{2018}]{rahman2018membership}
Rahman, M.~A.; Rahman, T.; Lagani{\`e}re, R.; Mohammed, N.; and Wang, Y.
\newblock 2018.
\newblock Membership inference attack against differentially private deep
  learning model.
\newblock {\em Trans. Data Priv.} 11(1):61--79.

\bibitem[\protect\citeauthoryear{Sablayrolles \bgroup et al\mbox.\egroup
  }{2019}]{sablayrolles2019white}
Sablayrolles, A.; Douze, M.; Schmid, C.; Ollivier, Y.; and J{\'e}gou, H.
\newblock 2019.
\newblock White-box vs black-box: Bayes optimal strategies for membership
  inference.
\newblock In {\em International Conference on Machine Learning},  5558--5567.
\newblock PMLR.

\bibitem[\protect\citeauthoryear{Shokri \bgroup et al\mbox.\egroup
  }{2017}]{shokri2017membership}
Shokri, R.; Stronati, M.; Song, C.; and Shmatikov, V.
\newblock 2017.
\newblock Membership inference attacks against machine learning models.
\newblock In {\em 2017 IEEE Symposium on Security and Privacy (SP)},  3--18.
\newblock IEEE.

\bibitem[\protect\citeauthoryear{Simonyan and
  Zisserman}{2014}]{simonyan2014very}
Simonyan, K., and Zisserman, A.
\newblock 2014.
\newblock Very deep convolutional networks for large-scale image recognition.
\newblock {\em arXiv preprint arXiv:1409.1556}.

\bibitem[\protect\citeauthoryear{Song and Mittal}{2021}]{song2021systematic}
Song, L., and Mittal, P.
\newblock 2021.
\newblock Systematic evaluation of privacy risks of machine learning models.
\newblock In {\em 30th $\{$USENIX$\}$ Security Symposium ($\{$USENIX$\}$
  Security 21)}.

\bibitem[\protect\citeauthoryear{Sun, Tu, and Guyon}{2021}]{sun2021omniprint}
Sun, H.; Tu, W.-W.; and Guyon, I.~M.
\newblock 2021.
\newblock Omniprint: A configurable printed character synthesizer.

\bibitem[\protect\citeauthoryear{Tan and Le}{2021}]{tan2021efficientnetv2}
Tan, M., and Le, Q.~V.
\newblock 2021.
\newblock Efficientnetv2: Smaller models and faster training.
\newblock {\em arXiv preprint arXiv:2104.00298}.

\bibitem[\protect\citeauthoryear{Truex \bgroup et al\mbox.\egroup
  }{2019}]{truex2019demystifying}
Truex, S.; Liu, L.; Gursoy, M.~E.; Yu, L.; and Wei, W.
\newblock 2019.
\newblock Demystifying membership inference attacks in machine learning as a
  service.
\newblock {\em IEEE Transactions on Services Computing}.

\bibitem[\protect\citeauthoryear{Wang and Hou}{2017}]{deepdaJindong}
Wang, J., and Hou, W.
\newblock 2017.
\newblock Deepda: Deep domain adaptation toolkit.
\newblock
  \url{https://github.com/jindongwang/transferlearning/tree/master/code/DeepDA}.

\bibitem[\protect\citeauthoryear{Xie \bgroup et al\mbox.\egroup
  }{2018}]{xie2018differentially}
Xie, L.; Lin, K.; Wang, S.; Wang, F.; and Zhou, J.
\newblock 2018.
\newblock Differentially private generative adversarial network.
\newblock {\em arXiv preprint arXiv:1802.06739}.

\bibitem[\protect\citeauthoryear{Yadav and Bottou}{2019}]{qmnist-2019}
Yadav, C., and Bottou, L.
\newblock 2019.
\newblock Cold case: The lost mnist digits.
\newblock In {\em Advances in Neural Information Processing Systems 32}. Curran
  Associates, Inc.

\bibitem[\protect\citeauthoryear{Yeom \bgroup et al\mbox.\egroup
  }{2018}]{yeom2018privacy}
Yeom, S.; Giacomelli, I.; Fredrikson, M.; and Jha, S.
\newblock 2018.
\newblock Privacy risk in machine learning: Analyzing the connection to
  overfitting.
\newblock In {\em 2018 IEEE 31st Computer Security Foundations Symposium
  (CSF)},  268--282.
\newblock IEEE.

\bibitem[\protect\citeauthoryear{Zhu \bgroup et al\mbox.\egroup
  }{2021}]{zhuDeepSubdomainAdaptation2021}
Zhu, Y.; Zhuang, F.; Wang, J.; Ke, G.; Chen, J.; Bian, J.; Xiong, H.; and He,
  Q.
\newblock 2021.
\newblock Deep {{Subdomain Adaptation Network}} for {{Image Classification}}.
\newblock {\em IEEE Transactions on Neural Networks and Learning Systems}
  32(4):1713--1722.

\end{thebibliography}


\appendix

\begin{center}
{\Large \bf Supplemental material}
\end{center}

\subsection{A. Derivation of the proof of Theorem~\ref{thm:blf}}

If the loss function $\ell(x)$ used to train the Defender model is bounded for all $x$, without loss of generality $0 \le \ell(x) \le 1$ (since loss functions can always be re-scaled), and if $e_R$, the expected value of the loss function on the Reserved data, is larger than $e_D$, the expected value of the loss function on the Defender data, then a lower bound on the expected accuracy of the \oracle is given by the following function of the generalization error $e^{}_R - e^{}_D$:
\begin{align}
A_{ltu} \ge \frac{1}{2} + \frac{1}{2} (e^{}_R - e^{}_D)
\end{align}

\begin{proof}
If the order of the pair $(u_1,u_2)$ is random and the loss function $\ell(x)$ is bounded by $0 \le \ell(x)\le 1$, then the \oracle could predict $u_1 \in \mathcal{D}_R$ with probability $\ell(u_1)$, by drawing $z \sim U(0,1)$ and predicting $u_1 \in \mathcal{D}_R$ if $z < \ell(u_1)$, and $u_1 \in \mathcal{D}_D$ otherwise. This gives the desired lower bound on the expected accuracy, derived as follows:
\begin{align}
A_{ltu} &= \frac{1}{2}\mathop{Pr}_{u_1 \sim \mathcal{D}_R}[z<\ell(u_1)] + \frac{1}{2}\left( 1 - \mathop{Pr}_{u_1 \sim \mathcal{D}_D}[z<\ell(u_1)]\right) \notag
\end{align}
When $u_1$ is drawn uniformly from $\mathcal{D}_R$, then:
\begin{align}
\mathop{Pr}_{u_1 \sim \mathcal{D}_R}[z<\ell(u_1)] &= \frac{1}{|\mathcal{D}_R|} \sum_{u_1 \in \mathcal{D}_R} \mathop{Pr}[z<\ell(u_1)] \notag \\
&= \frac{1}{|\mathcal{D}_R|} \sum_{u_1 \in \mathcal{D}_R} \ell(u_1) \notag \\
&= \mathop{\mathbb{E}}_{u_1 \sim \mathcal{D}_R}\left[\ell(u_1)\right]
\end{align}
Similarly, when $u_1$ is drawn uniformly from $\mathcal{D}_D$, then:
\begin{align}
\mathop{Pr}_{u_1 \sim \mathcal{D}_D}[z<\ell(u_1)] &= \frac{1}{|\mathcal{D}_D|} \sum_{u_1 \in \mathcal{D}_D} \mathop{Pr}[z<\ell(u_1)] \notag \\
&= \frac{1}{|\mathcal{D}_D|} \sum_{u_1 \in \mathcal{D}_D} \ell(u_1) \notag \\
&= \mathop{\mathbb{E}}_{u_1 \sim \mathcal{D}_D}\left[\ell(u_1)\right]
\end{align}
Substituting in these expected values gives:
\begin{align}
&= \frac{1}{2}\mathop{\mathbb{E}}_{u_1 \sim \mathcal{D}_R}\left[\ell(u_1)\right] + \frac{1}{2} - \frac{1}{2}\mathop{\mathbb{E}}_{u_1 \sim \mathcal{D}_D}\left[ \ell(u_1)\right] \notag \\
&= \frac{1}{2} + \frac{e^{}_R - e^{}_D}{2} \\
& \text{ where } e^{}_R \coloneqq \mathop{\mathbb{E}}_{u_1 \sim \mathcal{D}_R}\left[\ell(u_1)\right] \text{ and } e^{}_D \coloneqq \mathop{\mathbb{E}}_{u_1 \sim \mathcal{D}_D}\left[\ell(u_1)\right] \notag
\end{align}
\end{proof}

\subsection{B. Non-dominance of either strategy in Theorem~\ref{thm:pairwise} or Theorem~\ref{thm:blf}}
Here we present two simple examples to show that neither of the strategies in Theorem~\ref{thm:pairwise} or Theorem~\ref{thm:blf} dominates over the other.

For example 1, assume that for $d \sim \mathcal{D}_D$ the loss function takes the two values 0 or 0.5 with equal probability, and that for $r \sim \mathcal{D}_R$ the loss function takes the two values 0.3 or 0.4 with equal probability. Then $p_R = p_D = 1/2$ so that $p_R - p_D = 0$. However, $e^{}_R = 0.35$ and $e^{}_D = 0.25$, so $e^{}_R - e^{}_D = 0.1$.

For example 2, the joint probability mass function below can be used to compute that $( e^{}_R - e^{}_D ) = 0.15 < (p_r-p_d) = 0.22$.
\begin{table}[H]
\caption{Example joint PMF of bounded loss function, for $r \sim \mathcal{D}_R$ and $d \sim \mathcal{D}_D$. The attack strategy in theorem 1 outperforms the attack strategy in theorem 2 on this data.} 
\begin{tabular}{lcccc}
\multicolumn{1}{c}{}&
\multicolumn{1}{c}{} &
\multicolumn{1}{c}{$l(r)$} &
\multicolumn{1}{c}{} &
\multicolumn{1}{c}{} \\
\multicolumn{1}{c}{}&
\multicolumn{1}{c}{$0$} &
\multicolumn{1}{c}{$1/2$} &
\multicolumn{1}{c}{$1$} &
\multicolumn{1}{c}{row sum} \\
\cline{2-4}
    $l(d)=0$ & 0.24 & 0.24 & 0.12 & 0.6\\
\cline{2-4}
    $l(d)=1/2$ & 0.12 & 0.12 & 0.06 & 0.3\\
\cline{2-4}
    $l(d)=1$ & 0.04 & 0.04 & 0.02 & 0.1\\
\cline{2-4}
    column sum & 0.4 & 0.4 & 0.2 & \\
\end{tabular}
\end{table}

\subsection{C. LTU Global and Individual Privacy Scores}

The following small example illustrates that the pairwise prediction accuracy in the LTU methodology is not a function of the accuracy, false positive rate, or false negative rate of predictions made on individual samples.

Let $f(x)$ be the discriminative function trained to predict the probability that a sample is in the Reserved set (i.e. predictions made using a threshold of 0.5), and for simplicity consider Defender and Reserved sets with three samples each, such that:
\begin{alignat*}{2}
f(d_1) &=0.1 \qquad & f(r_1) &=0.4\\
f(d_2) &=0.3 & f(r_2) &=0.7\\
f(d_3) &=c   & f(r_3) &=0.9
\end{alignat*}
If $c$ is either $0.6$, $0.8$, or $0.95$, then in all three cases the overall accuracy for individual sample predictions is $2/3$, and the false positive rate and false negative rate are both $1/3$. However, in the LTU methodology, the \oracle would get 9 different pairs to predict, and for those values of $c$, its accuracy would be $8/9$, $7/9$, or $6/9$, respectively.

We used the ML Privacy Meter python library of Shokri et al.\footnote{\url{https://github.com/privacytrustlab/ml\_privacy\_meter}.} to run their attack of AlexNet, which achieved an attack accuracy of 74.9\%. Their attack model predicts the probability that each sample was in the Defender dataset. The histograms of the predictions over the Defender dataset and Reserved dataset follow:
\begin{figure}[ht]
  \includegraphics[width=0.23\textwidth]{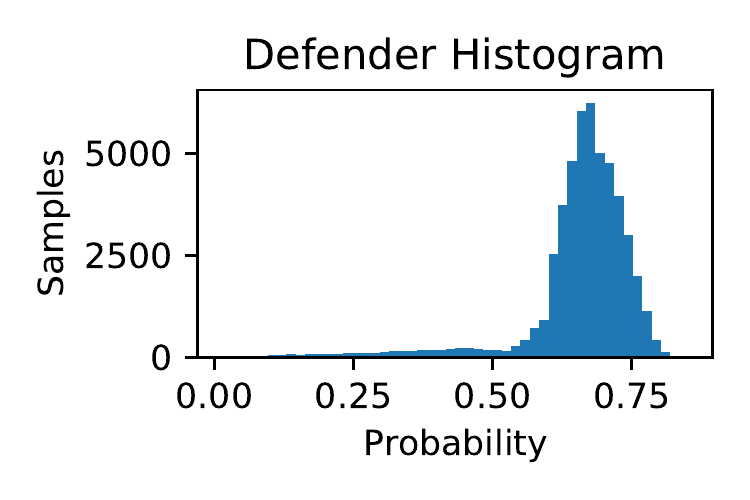}
  \includegraphics[width=0.23\textwidth]{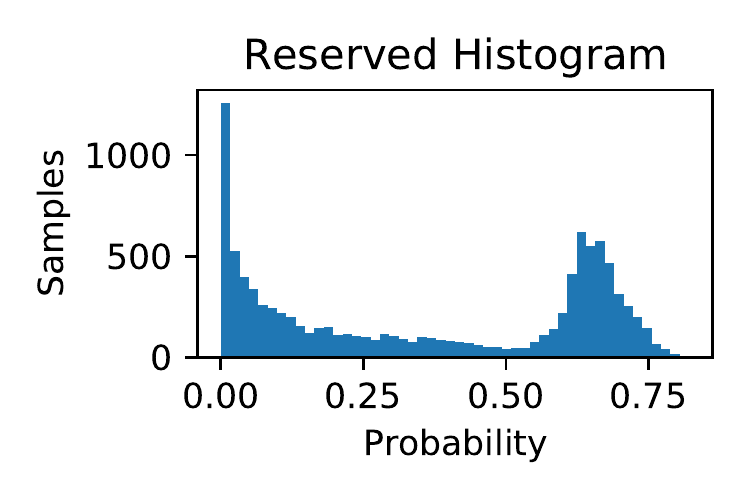}
  \caption{Histogram of Predicted Probabilities}
  \label{fig:ml_privacy_meter}
\end{figure}

Using their attack model predictions in the LTU methodology, so that the attacker is always shown pairs of points for which exactly one was from the Defender dataset, increased the overall attack accuracy to 81.3\%.  

Furthermore, by evaluating the accuracy of the attacker on each Defender sample individually (against all Reserved samples), we computed easy to interpret individual privacy scores for each sample in the Defender dataset:
\begin{figure}[ht]
  \includegraphics[width=0.3\textwidth]{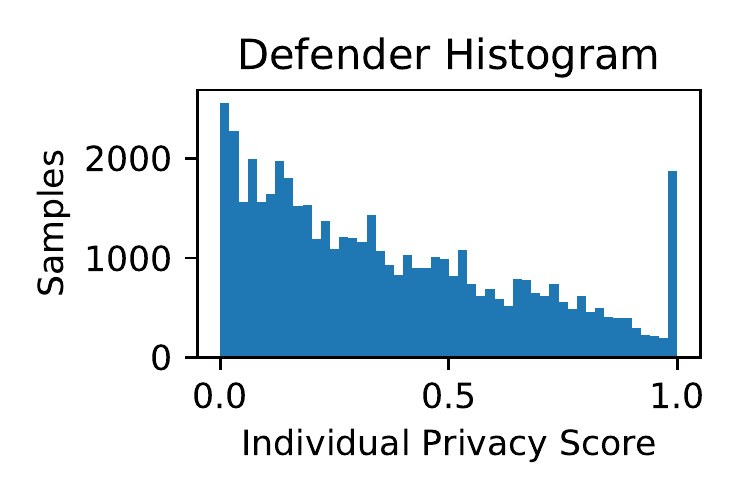}
  \caption{Histogram of Individual Privacy Scores for each sample in the Defender dataset}
  \label{fig:ind_priv_score_hist}
\end{figure}

\end{document}